\documentclass[letterpaper, 10pt, conference]{ieeeconf}

\IEEEoverridecommandlockouts
\usepackage{times}
\usepackage{amsmath,amssymb,amsfonts}
\usepackage{algorithmic}
\usepackage{algorithm}
\usepackage{graphicx}
\usepackage{textcomp}
\usepackage{xcolor}
\usepackage{booktabs}
\usepackage{placeins}
\usepackage{multirow}
\usepackage{cite}
\usepackage[colorlinks=true,linkcolor=red,citecolor=green!50!black,urlcolor=red]{hyperref}
\usepackage{bm}

\newtheorem{proposition}{Proposition}
\newcommand{\qed}{\hfill\QED}

\newcommand{\R}{\mathbb{R}}
\newcommand{\E}{\mathbb{E}}

\newcommand{\Dset}{\mathcal{D}}
\newcommand{\xgoal}{x_{\mathrm{goal}}}
\newcommand{\xzero}{x_{0}}
\newcommand{\Yhat}{\hat{Y}}
\newcommand{\Xhat}{\hat{X}}
\newcommand{\wh}{\hat{w}}

\title{\LARGE \bf Behavioral Score Diffusion: Model-Free Trajectory Planning via Kernel-Based Score Estimation from Data}

\author{Shihao Li, Jiachen Li, Jiamin Xu, Dongmei Chen% <-this % stops a space
\thanks{All authors are with The University of Texas at Austin.
{\tt\small \{shihaoli01301, jiachenli\}@utexas.edu, jiaminxu@my.utexas.edu, dmchen@me.utexas.edu}}%
}

\begin{document}

\maketitle

% ============================================================
% ABSTRACT
% ============================================================
\begin{abstract}
Diffusion-based trajectory optimization has emerged as a powerful planning paradigm, but existing methods require either learned score networks trained on large datasets or analytical dynamics models for score computation. We introduce \emph{Behavioral Score Diffusion} (BSD), a training-free and model-free trajectory planner that computes the diffusion score function directly from a library of trajectory data via kernel-weighted estimation. At each denoising step, BSD retrieves relevant trajectories using a triple-kernel weighting scheme---diffusion proximity, state context, and goal relevance---and computes a Nadaraya-Watson estimate of the denoised trajectory. The diffusion noise schedule naturally controls kernel bandwidths, creating a multi-scale nonparametric regression: broad averaging of global behavioral patterns at high noise, fine-grained local interpolation at low noise. This coarse-to-fine structure handles nonlinear dynamics without linearization or parametric assumptions. Safety is preserved by applying shielded rollout on kernel-estimated state trajectories, identical to existing model-based approaches. We evaluate BSD on four robotic systems of increasing complexity (3D--6D state spaces) in a parking scenario. BSD with fixed bandwidth achieves 98.5\% of the model-based baseline's average reward across systems while requiring no dynamics model, using only 1{,}000 pre-collected trajectories. BSD substantially outperforms nearest-neighbor retrieval (18--63\% improvement), confirming that the diffusion denoising mechanism is essential for effective data-driven planning.
\href{https://sheehow.github.io/behavioral-score-diffusion/}{\textbf{[Project Page]}}
\href{https://sheehow.github.io/behavioral-score-diffusion/code/}{\textbf{[Code]}}
\end{abstract}

% ============================================================
% I. INTRODUCTION
% ============================================================
\section{Introduction}

Trajectory optimization is fundamental for autonomous robots in constrained environments, but classical approaches require explicit dynamics models~\cite{betts1998survey}. Obtaining accurate models is expensive or infeasible for many real-world systems---articulated vehicles with complex tire-ground interactions, soft robots, or systems with proprietary dynamics.

Diffusion-based trajectory optimization~\cite{janner2022planning, pan2024mbd} reformulates planning as iterative denoising. Model-Based Diffusion (MBD)~\cite{pan2024mbd} computes the score function via reward-weighted importance sampling over dynamics rollouts, achieving strong performance without neural network training. Safe-MPD~\cite{kim2026safempd} extends MBD with a safety shield enforcing collision-free trajectories during denoising. However, both require an analytical dynamics model---limiting applicability and coupling planning quality to model fidelity.

We propose \emph{Behavioral Score Diffusion} (BSD), which eliminates this model dependency entirely (Fig.~\ref{fig:overview}). BSD computes the denoised trajectory estimate at each step directly from a library of pre-collected trajectory data via Nadaraya-Watson kernel regression, where kernel weights encode diffusion proximity, initial state context, and goal relevance.

The diffusion noise schedule creates a natural multi-scale structure: broad kernels at high noise capture global behavioral patterns, narrow kernels at low noise resolve fine-grained nonlinear dynamics. This coarse-to-fine estimation is inherently nonparametric---no linearization or LTI assumptions required. Safety is preserved because the shielded rollout operates on kernel-estimated states identically to model-predicted ones.

\textbf{Contributions.} Our contributions are fourfold:
\begin{enumerate}
    \item We introduce Behavioral Score Diffusion, a training-free and model-free diffusion planner that replaces dynamics-based score computation with kernel-based trajectory data estimation, requiring no analytical model or neural network.
    \item We prove pointwise consistency of the kernel score estimate for arbitrary continuous dynamics, characterize its MSE rate, and show that BSD reduces to regularized DeePC for LTI systems---formalizing the connection between diffusion planning and behavioral systems theory.
    \item We demonstrate that BSD with fixed bandwidth and multi-sample selection achieves 98.5\% of the model-based baseline's reward across four robotic systems (3D--6D states), while a no-diffusion nearest-neighbor baseline achieves only 75.0\%, confirming the essential role of diffusion denoising.
    \item We provide ablation evidence that the multi-sample exploration mechanism (K=20{,}000 candidates with reward selection) renders adaptive bandwidth scheduling unnecessary, simplifying the method.
\end{enumerate}

\begin{figure*}[t]
    \centering
    \includegraphics[width=\textwidth]{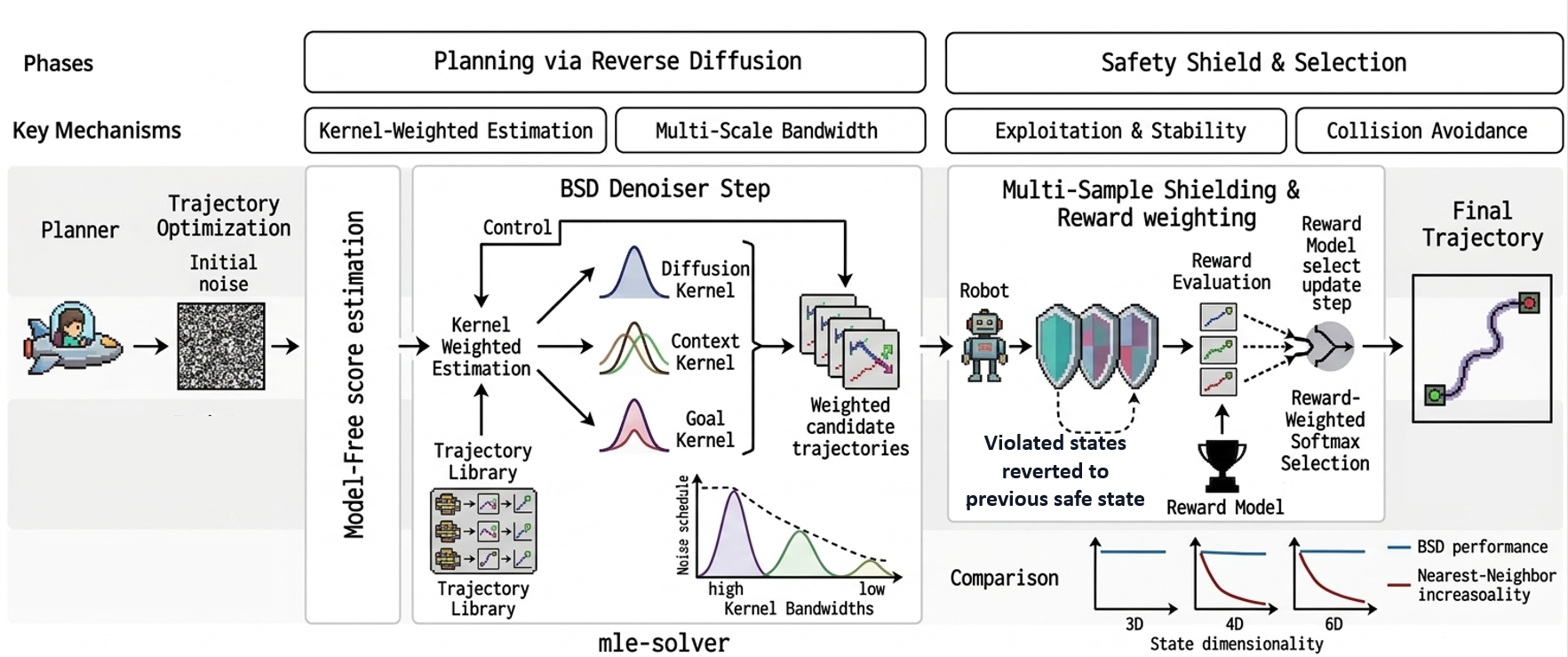}
    \caption{Overview of Behavioral Score Diffusion. \textbf{Left:} A planner initializes from noise; model-free score estimation replaces dynamics rollouts. \textbf{Center:} The BSD denoiser step computes triple-kernel weights (diffusion, context, goal) over the trajectory library, with the noise schedule controlling bandwidth from broad (high noise) to narrow (low noise). \textbf{Right:} Multi-sample shielding reverts violated states, followed by reward-weighted softmax selection. Bottom-right: BSD performance scales gracefully with state dimensionality while nearest-neighbor degrades.}
    \label{fig:overview}
\end{figure*}

% ============================================================
% II. RELATED WORK
% ============================================================
\section{Related Work}

\textbf{Diffusion-based planning.}
Janner et al.~\cite{janner2022planning} introduced trajectory-level diffusion for planning, spawning a family of methods including return-conditioned generation~\cite{ajay2023decision}, visuomotor policies~\cite{chi2023diffusionpolicy}, and real-time autonomous driving~\cite{liao2025diffusiondrive}. These approaches train neural score networks on demonstration data. In contrast, MBD~\cite{pan2024mbd} computes scores analytically using dynamics models, eliminating the need for training data but requiring model access. DiffuserLite~\cite{dong2024diffuserlite} achieves real-time rates through coarse-to-fine planning.

\textbf{Safe diffusion planning.}
Safe-MPD~\cite{kim2026safempd} integrates geometric safety shields into the MBD denoising loop, enforcing collision avoidance and kinodynamic constraints at every diffusion step. DualShield~\cite{dualshield2026} adds Hamilton-Jacobi reachability guidance. SafeDiffuser~\cite{safediffuser2025} embeds control barrier functions into denoising. Constrained Diffusers~\cite{zhang2025constrained} use projected and primal-dual Langevin sampling. All these methods assume access to either a dynamics model or a pre-trained diffusion model. BSD preserves the shielding mechanism while eliminating the dynamics model requirement.

\textbf{Data-driven predictive control.}
Willems' Fundamental Lemma~\cite{willems2005note} establishes that trajectories of controllable LTI systems are fully characterized by a single persistently exciting input-output trajectory. DeePC~\cite{coulson2019deepc} operationalizes this via Hankel matrix-based prediction and receding-horizon optimization, with distributionally robust extensions for noisy settings~\cite{coulson2022robust}. However, the LTI assumption limits DeePC to mildly nonlinear systems without patches such as local data selection~\cite{verhoek2023} or lifting~\cite{alsalti2024}. BSD generalizes beyond the LTI setting: the Nadaraya-Watson estimator converges to the true conditional expectation for \emph{any} continuous dynamics (Proposition~\ref{prop:consistency}), and for LTI systems the two methods are equivalent up to regularizer choice (Proposition~\ref{prop:deepc}).

\textbf{Kernel score estimation.}
Recent work shows that diffusion score functions can be estimated directly from data samples without training neural networks. Yang and He~\cite{yang2026trainingfree} use kernel-weighted estimators for score-based SDE sampling. Gabriel et al.~\cite{veiga2025kernelsmoothed} provide theoretical analysis (LED-KDE) of kernel-smoothed scores for denoising diffusion, establishing bias-variance trade-offs. Epstein et al.~\cite{winnicki2025sdkde} address score debiasing in kernel density estimation. We apply this principle to trajectory planning: stored control sequences serve as data points, and kernel weights at each denoising step produce the score function estimate.

% ============================================================
% III. PRELIMINARIES
% ============================================================
\section{Preliminaries}

\subsection{Model-Based Diffusion (MBD)}

We consider discrete-time trajectory optimization. Given a system with dynamics $x_{t+1} = f(x_t, u_t)$, initial state $\xzero$, and goal $\xgoal$, the objective is to find a control sequence $Y = (u_0, \ldots, u_{H-1}) \in \R^{H \times N_u}$ that maximizes a reward $R(X, \xgoal)$ where $X = (x_0, \ldots, x_H)$ is the state trajectory obtained by rolling out $Y$ through $f$.

MBD~\cite{pan2024mbd} solves this by reverse diffusion. Starting from noise $Y_N \sim \mathcal{N}(0, I)$, the trajectory is iteratively denoised for $i = N{-}1, \ldots, 0$. At each step, MBD:
\begin{enumerate}
    \item Draws $K$ candidate denoised trajectories $\{Y_0^{(k)}\}_{k=1}^K$ from a Gaussian centered at the current estimate.
    \item Rolls out each candidate through dynamics: $X^{(k)} = \text{rollout}(Y_0^{(k)}, \xzero, f)$.
    \item Computes rewards: $r^{(k)} = R(X^{(k)}, \xgoal)$.
    \item Updates via reward-weighted average:
\end{enumerate}
\begin{equation}
    \bar{Y}_{i-1} = \sum_{k=1}^K \text{softmax}(r^{(k)} / \tau)_k \cdot Y_0^{(k)}
    \label{eq:mbd_update}
\end{equation}
where $\tau$ is a temperature parameter. The noise schedule $\{\sigma_i\}$ controls the variance of the Gaussian perturbation at each step.

\subsection{Safe-MPD Shielded Rollout}

Safe-MPD~\cite{kim2026safempd} augments MBD with a safety shield applied during rollout. For each state $x_t$ in a candidate trajectory:
\begin{equation}
    x_t = \begin{cases}
        f(x_{t-1}, u_{t-1}) & \text{if } x_t \in \mathcal{C}_{\text{safe}} \\
        x_{t-1} & \text{otherwise}
    \end{cases}
\end{equation}
where $\mathcal{C}_{\text{safe}}$ is the set of collision-free, constraint-satisfying states. This geometric shield operates on predicted states regardless of their source---a property BSD exploits.

\subsection{Nadaraya-Watson Kernel Regression}

Given observations $\{(z_j, v_j)\}_{j=1}^N$ with inputs $z_j$ and outputs $v_j$, the Nadaraya-Watson estimator~\cite{nadaraya1964, watson1964} of the conditional expectation is:
\begin{equation}
    \hat{m}(z) = \frac{\sum_{j=1}^N K_h(z - z_j) \cdot v_j}{\sum_{j=1}^N K_h(z - z_j)}
    \label{eq:nw}
\end{equation}
where $K_h$ is a kernel function with bandwidth $h$. For continuous target functions, $\hat{m}(z) \to \E[v \mid z]$ as $N \to \infty$ and $h \to 0$ with $Nh^d \to \infty$~\cite{bierens1987kernel}. Crucially, this convergence holds for \emph{arbitrary} nonlinear functions---no parametric or linearity assumptions are required.

% ============================================================
% IV. METHOD
% ============================================================
\section{Behavioral Score Diffusion}

\subsection{Problem Setting}

Given a dataset $\Dset = \{(u_j, x_j, r_j)\}_{j=1}^N$ of $N$ input-output trajectories collected from a system (e.g., via an existing model-based planner), where $u_j \in \R^{H \times N_u}$ are control sequences, $x_j \in \R^{H \times N_x}$ are state trajectories, and $r_j \in \R$ are associated rewards, along with a current state $\xzero$ and goal $\xgoal$, BSD produces a safe, goal-reaching control sequence without access to the dynamics model $f$.

\subsection{Kernel-Based Score Estimation}

The core idea is to replace MBD's dynamics rollout with a kernel regression over the trajectory dataset. At denoising step $i$ with current noisy trajectory $Y_i$, we compute three kernel weights for each data trajectory $j$:

\textbf{Diffusion kernel.} Measures similarity between the noisy trajectory and stored controls:
\begin{equation}
    \log w_j^{\text{diff}} = -\frac{\|Y_i - u_j\|^2}{2\beta_i^2}
    \label{eq:w_diff}
\end{equation}
where $\beta_i = c \cdot \sigma_i \cdot d^{1/2}$ scales with the diffusion noise level $\sigma_i$ and control dimensionality $d = H \times N_u$.

\textbf{Context kernel.} Matches initial states:
\begin{equation}
    \log w_j^{\text{ctx}} = -\frac{\|\xzero - x_j[0]\|^2}{2\nu_x^2}
    \label{eq:w_ctx}
\end{equation}

\textbf{Goal kernel.} Scores goal proximity:
\begin{equation}
    \log w_j^{\text{goal}} = -\frac{\|x_j[H] - \xgoal\|^2}{2\nu_g^2}
    \label{eq:w_goal}
\end{equation}

\textbf{Reward temperature.} Incorporates trajectory quality:
\begin{equation}
    \log w_j^{\text{rew}} = \eta \cdot \tilde{r}_j
    \label{eq:w_rew}
\end{equation}
where $\tilde{r}_j = (r_j - \bar{r}) / (r_{\max} - r_{\min})$ is the normalized reward and $\eta$ is a temperature parameter.

The combined weight is $w_j = \exp(\log w_j^{\text{diff}} + \log w_j^{\text{ctx}} + \log w_j^{\text{goal}} + \log w_j^{\text{rew}})$, normalized as $\wh_j = w_j / \sum_k w_k$.

The denoised trajectory estimate and state prediction are then:
\begin{equation}
    \Yhat_0 = \sum_{j=1}^N \wh_j \cdot u_j, \quad \Xhat = \sum_{j=1}^N \wh_j \cdot x_j
    \label{eq:bsd_estimate}
\end{equation}

This is a Nadaraya-Watson estimator (Eq.~\ref{eq:nw}) where the ``input'' is the tuple $(Y_i, \xzero, \xgoal, r)$ and the ``output'' is the trajectory pair $(u, x)$. The state prediction $\Xhat$ comes ``for free''---the same kernel weights that estimate the denoised controls also estimate the resulting states.

\subsection{Multi-Sample Selection}

Rather than using a single kernel-weighted average, BSD draws $K$ candidate denoised trajectories from the kernel-weighted distribution and applies reward-based selection (mirroring MBD's mechanism):
\begin{enumerate}
    \item Draw $K$ candidates: $Y_0^{(k)} \sim \sum_j \wh_j \cdot \delta_{u_j}$ (multinomial sampling from dataset trajectories with kernel weights).
    \item Retrieve corresponding states: $X^{(k)} = x_{j(k)}$ where $j(k)$ is the sampled index.
    \item Apply safety shield: $X^{(k)}_{\text{safe}} = \text{shield}(X^{(k)})$.
    \item Compute shielded reward: $r^{(k)} = R(X^{(k)}_{\text{safe}}, \xgoal)$.
    \item Select via reward softmax: $\bar{Y}_{i-1} = \sum_k \text{softmax}(r^{(k)}/\tau)_k \cdot Y_0^{(k)}$.
\end{enumerate}

With $K = 20{,}000$ samples (matching MBD), this exploration mechanism renders adaptive bandwidth scheduling unnecessary: fixed broad bandwidths allow all dataset trajectories to remain reachable throughout denoising, while the reward-weighted selection handles exploitation.

\subsection{Algorithm Summary}

\begin{algorithm}[!htbp]
\caption{Behavioral Score Diffusion (BSD)}
\label{alg:bsd}
\begin{algorithmic}[1]
\REQUIRE Dataset $\Dset = \{(u_j, x_j, r_j)\}_{j=1}^N$, state $\xzero$, goal $\xgoal$
\STATE $Y_N \sim \mathcal{N}(0, I)$
\FOR{$i = N{-}1, \ldots, 1$}
    \STATE Compute kernel weights $\wh_j$ via Eqs.~(\ref{eq:w_diff})--(\ref{eq:w_rew})
    \STATE Draw $K$ candidates $\{Y_0^{(k)}, X^{(k)}\}$ from $\Dset$ with weights $\wh_j$
    \STATE Apply safety shield: $X^{(k)}_{\text{safe}} \leftarrow \text{shield}(X^{(k)})$
    \STATE Compute rewards: $r^{(k)} \leftarrow R(X^{(k)}_{\text{safe}}, \xgoal)$
    \STATE $\bar{Y}_{i-1} \leftarrow \sum_k \text{softmax}(r^{(k)}/\tau)_k \cdot Y_0^{(k)}$
    \IF{$i > 1$}
    \STATE $Y_{i-1} \leftarrow \bar{Y}_{i-1} + \sigma_{i-1} \cdot \epsilon, \quad \epsilon \sim \mathcal{N}(0, I)$
\ELSE
    \STATE $Y_0 \leftarrow \bar{Y}_0$
\ENDIF
\ENDFOR
\RETURN $Y_0$
\end{algorithmic}
\end{algorithm}
\FloatBarrier

\subsection{Safety Preservation}

BSD preserves the shielded rollout from Safe-MPD. The shield operates on predicted states $\Xhat$ by checking geometric constraints (collision, hitch angle limits) at each timestep. If a state violates constraints, it is reverted to the previous safe state. Because the shield's correctness depends only on the states it receives---not on how they were computed---it provides the same safety enforcement for kernel-estimated states as for dynamics-predicted states. The collision margin parameter accounts for inter-step discretization in both cases.

% ============================================================
% V. THEORETICAL ANALYSIS
% ============================================================
\section{Theoretical Analysis}

We establish formal guarantees for BSD's kernel-based score estimation.  All proofs follow from classical results in nonparametric regression~\cite{bierens1987kernel, nadaraya1964, watson1964} adapted to the diffusion planning setting.

\subsection{Assumptions}

We require the following regularity conditions on the data-generating process and kernel function.

\begin{enumerate}
    \item[\textbf{(A1)}] \textbf{Smooth dynamics.} The true dynamics $f$ is Lipschitz continuous: $\|f(x,u) - f(x',u')\| \leq L_f(\|x - x'\| + \|u - u'\|)$ for some constant $L_f > 0$.
    \item[\textbf{(A2)}] \textbf{Bounded trajectories.} All trajectories in $\Dset$ lie in a compact set: $\|u_j\| \leq B_u$, $\|x_j\| \leq B_x$ for all $j$.
    \item[\textbf{(A3)}] \textbf{Data density.} The joint density $p(Y, \xzero, \xgoal)$ of control-state-goal tuples in $\Dset$ is bounded away from zero in the operating region $\Omega$: $\inf_{z \in \Omega} p(z) \geq p_{\min} > 0$.
    \item[\textbf{(A4)}] \textbf{Kernel regularity.} The product kernel $K_h(z) = K_h^{\text{diff}} \cdot K_h^{\text{ctx}} \cdot K_h^{\text{goal}}$ is a symmetric, non-negative function with $\int K(u)\,du = 1$, finite second moment $\mu_2(K) = \int u^2 K(u)\,du < \infty$, and $\int K^2(u)\,du < \infty$.
\end{enumerate}

\subsection{Pointwise Consistency}

\begin{proposition}[Consistency of BSD Estimate]
\label{prop:consistency}
Under assumptions (A1)--(A4), let $\hat{m}_N(z)$ denote BSD's Nadaraya-Watson trajectory estimate (Eq.~\ref{eq:bsd_estimate}) at query point $z = (Y_i, \xzero, \xgoal)$ with bandwidth $h = h(N)$.  If $h \to 0$ and $Nh^{d_z} \to \infty$ as $N \to \infty$, where $d_z$ is the dimension of the joint query space, then
\begin{equation}
    \hat{m}_N(z) \xrightarrow{\;p\;} \E\bigl[u \mid Y_i, \xzero, \xgoal\bigr]
    \label{eq:consistency}
\end{equation}
for every $z$ in the interior of $\Omega$.
\end{proposition}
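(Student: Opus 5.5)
We prove Proposition~\ref{prop:consistency} with the classical ratio decomposition of the Nadaraya-Watson estimator. Treat $\{(z_j, u_j)\}_{j=1}^N$, with $z_j = (u_j, x_j[0], x_j[H])$ the query-space coordinates of trajectory $j$, as i.i.d.\ draws from the joint law whose density is $p$ in (A3). Set $m(z) = \E[u \mid z]$. The reward factor $\exp(\eta \tilde r_j)$ is bounded above and below, since rewards are bounded on the compact set of (A2). For the consistency statement we set $\eta = 0$, or equivalently absorb the reward factor into the sampling density. This only reweights $p$ and keeps the lower bound $p_{\min}$ up to a constant. Write
\begin{equation*}
\hat m_N(z) = \frac{\hat g_N(z)}{\hat p_N(z)}, \qquad \hat p_N(z) = \frac{1}{N}\sum_{j} K_h(z - z_j), \qquad \hat g_N(z) = \frac{1}{N}\sum_j K_h(z-z_j)\,u_j ,
\end{equation*}
where $K_h(\cdot) = h^{-d_z}K(\cdot/h)$ is the product kernel of (A4). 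We take a common scale $h$, with the fixed factors $c\,d^{1/2}$, $\nu_x$, $\nu_g$ absorbed into the kernel's shape. The plan is to show $\hat p_N(z) \to p(z)$ and $\hat g_N(z) \to m(z)p(z)$ in probability. The continuous mapping theorem then gives the claim, because $p(z) \ge p_{\min} > 0$ on $\Omega$.

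\textbf{Bias.} By the change of variables $z' = z - hs$, we have $\E\hat p_N(z) = \int K(s)\,p(z-hs)\,ds$ and $\E \hat g_N(z) = \int K(s)\,(mp)(z-hs)\,ds$. Both converge to $p(z)$ and $(mp)(z)$ by dominated convergence, provided $p$ and $mp$ are continuous at $z$ and $z$ is interior to $\Omega$; interiority ensures $z - hs \in \Omega$ for small $h$ on the bulk of the kernel's mass. Continuity of $m$ is where (A1) enters. Because $u \mapsto x_j$ is a finite composition of Lipschitz maps, the final state $x[H]$ is a Lipschitz function of $(x_0, u)$. Conditioning on $(Y_i, x_0, x_{\mathrm{goal}})$ therefore gives a conditional mean that varies continuously in the query. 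We assume $p$ is continuous on $\Omega$ as part of the regularity implicit in (A3). With bounded second moment $\mu_2(K)$ and $C^2$ smoothness one would get $O(h^2)$ bias, but for consistency continuity suffices.

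\textbf{Variance.} Because the summands are i.i.d.,
\begin{equation*}
\mathrm{Var}\,\hat g_N(z) \le \frac{1}{N}\E\bigl[K_h(z-z_1)^2\|u_1\|^2\bigr] \le \frac{B_u^2}{N h^{d_z}}\int K^2(s)\,p(z-hs)\,ds = O\bigl((Nh^{d_z})^{-1}\bigr),
\end{equation*}
using (A2) and $\int K^2 < \infty$ from (A4); the same bound holds for $\hat p_N$. Since $Nh^{d_z}\to\infty$, Chebyshev's inequality gives convergence of each term to its mean in probability. Combined with the bias step, this gives convergence to the limits $p(z)$ and $(mp)(z)$.

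\textbf{Main obstacle.} The difficult step is the modelling mismatch, not the analysis. In BSD the query coordinate $Y_i$ is compared against the stored controls $u_j$, and the goal kernel uses the achieved endpoint $x_j[H]$. The regressor and the response are therefore partially the same variable. Consequently $\E[u \mid Y_i = y, \dots]$ under the kernel weighting is the conditional mean of $u$ given that $u$ lies near $y$, which in the limit is $y$ itself (or its projection under the data law). We will make this explicit by defining the target as the conditional expectation under the joint data law given the tuple $z$, and by checking that the interior and positivity conditions of (A3) hold for this law. A second, minor technicality is the anisotropic bandwidths $\beta_i$, $\nu_x$, $\nu_g$. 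We handle these by requiring that all of them shrink proportionally to $h$, so the product kernel satisfies (A4) with a single scale.
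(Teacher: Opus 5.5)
Your proposal is correct and follows the paper's route. The paper's proof just invokes Bierens' classical Nadaraya--Watson consistency theorem, and your ratio decomposition $\hat m_N=\hat g_N/\hat p_N$ (bias step, $O\bigl((Nh^{d_z})^{-1}\bigr)$ variance bound, Chebyshev, continuous mapping) is the standard proof of that theorem. Your explicit conventions fill in what the sketch leaves implicit: $\eta=0$, a single scale $h(N)$ shrinking all three bandwidths (rather than the paper's appeal to the diffusion schedule), and continuity of $p$ at $z$.

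One correction: continuity of $m$ does not come from (A1). Because $z_j$ contains $u_j$, $m(z)$ is simply the $Y_i$-coordinate of $z$, as your last paragraph observes, so it is trivially continuous. Moreover, your Lipschitz map $x[H]=F(x_0,u)$ would put the data on a lower-dimensional graph, which is in tension with the full-dimensional density assumed in (A3) rather than supporting the argument.
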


\noindent\textit{Proof sketch.}
Direct application of the NW consistency theorem~\cite{bierens1987kernel}. BSD's product kernel satisfies (A4); the bandwidth $\beta_i = c \cdot \sigma_i \cdot d^{1/2}$ ensures $h \to 0$ via the diffusion schedule. The condition $Nh^{d_z} \to \infty$ requires dataset size to grow faster than $h^{-d_z}$. \qed

\subsection{Per-Step Mean Squared Error}

\begin{proposition}[MSE Bound]
\label{prop:mse}
Under (A1)--(A4), assume additionally that the conditional expectation $m(z) = \E[u \mid z]$ is twice continuously differentiable with bounded Hessian $\|H_m\| \leq C_m$.  Then for each denoising step $i$, the mean squared error of BSD's estimate satisfies
\begin{multline}
    \E\bigl[\|\hat{m}_N(z) - m(z)\|^2\bigr] = \underbrace{\tfrac{\mu_2(K)^2}{4}\|H_m(z)\|_F^2 h^4}_{{\rm bias}^2} \\
    + \underbrace{\tfrac{\|K\|_2^2 \, \sigma_v^2(z)}{N h^{d_z} p(z)}}_{{\rm variance}} + o\!\bigl(h^4 + (Nh^{d_z})^{-1}\bigr)
    \label{eq:mse}
\end{multline}
where $\sigma_v^2(z) = \text{Var}[u \mid z]$ is the conditional variance and $\|K\|_2^2 = \int K^2(u)\,du$.
\end{proposition}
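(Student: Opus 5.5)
The plan is to run the classical bias--variance analysis of the Nadaraya--Watson estimator, treating BSD's weights as a product kernel on the joint query $z=(Y_i,\xzero,\xgoal)$ with a common bandwidth $h$. The reward factor $\eta\tilde r_j$ is absorbed into a reweighted sampling density; it does not change the rate. Work componentwise in $u$ and sum the per-coordinate MSEs at the end to obtain the Frobenius and trace forms.

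\textbf{Step 1: linearize the ratio.} Write $\hat{m}_N(z)=\hat g(z)/\hat p(z)$, where $\hat p(z)=N^{-1}\sum_j K_h(z-z_j)$ and $\hat g(z)=N^{-1}\sum_j K_h(z-z_j)u_j$. Then
\[
\hat m_N-m=\frac{(\hat g-m\hat p)}{p}\Bigl(1+O_p(|\hat p-p|/p)\Bigr).
\]
By (A3) we have $p(z)\ge p_{\min}>0$, and by Proposition~\ref{prop:consistency} $\hat p\to p$. Hence it suffices to compute the first two moments of the numerator $\hat g-m\hat p=N^{-1}\sum_j K_h(z-z_j)(u_j-m(z))$. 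The remainder is absorbed into $o(h^4+(Nh^{d_z})^{-1})$. Boundedness of $u_j$ from (A2) guarantees that all moments exist, so the expectation of the ratio can be controlled.

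\textbf{Step 2: bias.} Change variables $z'=z+hs$ in $\E[K_h(z-z')(m(z')-m(z))]$ and Taylor expand both $m$ and $p$ to second order. The first-order term vanishes because $K$ is symmetric (A4). The remaining term is $\tfrac{h^2}{2}\mu_2(K)\,\mathrm{tr}\,H_m(z)\,p(z)$ plus a cross term $h^2\mu_2\nabla m\cdot\nabla p$. The Hessian bound $C_m$ and the density bound let us control the remainders uniformly.

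Dividing by $p$ and squaring gives the stated bias$^2$ term $\tfrac{\mu_2^2}{4}\|H_m\|^2h^4$. There is a caveat here. The stated constant uses $\|H_m\|_F$ with no $\nabla p$ term, so I will either assume a locally uniform design density, or else interpret $\|H_m(z)\|_F$ as bounding the combined second-order bias term. I will say explicitly which choice is made.

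\textbf{Step 3: variance.} The summands are i.i.d., so
\[
\mathrm{Var}=N^{-1}\Bigl(\E[K_h^2(z-z_j)\|u_j-m(z)\|^2]-O(h^4)\Bigr).
\]
Substituting $z'=z+hs$ gives $\E[K_h^2\,\cdot]=h^{-d_z}\|K\|_2^2\,\sigma_v^2(z)p(z)(1+o(1))$. Here we use continuity of $\sigma_v^2$ and $p$, and $\int K^2<\infty$ from (A4). Dividing by $p^2$ yields $\|K\|_2^2\sigma_v^2/(Nh^{d_z}p)$. Summing bias$^2$ and variance then gives \eqref{eq:mse}.

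\textbf{Main obstacle.} The hardest step is the ratio linearization in Step 1. We need an MSE statement, not just an in-probability one, so the denominator $\hat p$ must be kept away from zero in expectation. I would handle this by splitting on the event $\{\hat p\ge p/2\}$. On that event the expansion is valid. The complement has probability $\exp(-cNh^{d_z})$ by Bernstein's inequality, and on it the estimator is still bounded by $B_u$ because it is a convex combination of the $u_j$. That contribution is therefore $o((Nh^{d_z})^{-1})$.

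A secondary issue is that BSD's bandwidths differ across the three kernel blocks and are tied to $\sigma_i$. I would take $h$ to be the common scale with fixed ratios $c\sigma_i\sqrt d:\nu_x:\nu_g$ absorbed into $K$, so each denoising step $i$ is a fixed-bandwidth regression.
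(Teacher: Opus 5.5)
Your route is the paper's route. The paper's proof is a one-line appeal to the standard Nadaraya--Watson bias--variance decomposition, and you carry that decomposition out. The ratio linearization on $\{\hat p\ge p/2\}$ with a Bernstein bound on the complement is fine, and so is the variance computation in Step~3.

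The genuine gap is the sentence in Step~2 asserting that dividing by $p$ and squaring gives the stated bias$^2$ term. Your own expansion gives, for each output coordinate $k$, bias $h^2\mu_2\bigl(\tfrac{1}{2}\mathrm{tr}\,H_{m_k}(z)+\nabla m_k(z)^\top\nabla\log p(z)\bigr)+o(h^2)$. Even after you delete the design term by assuming a locally uniform density, squaring produces $\tfrac{\mu_2^2}{4}h^4\sum_k(\mathrm{tr}\,H_{m_k}(z))^2$, not $\tfrac{\mu_2^2}{4}h^4\|H_m(z)\|_F^2$. These disagree once $d_z\ge 2$:
\begin{enumerate}
\item For $m(z)=z_1z_2$ under locally uniform design, the $h^2$-order bias is exactly zero: the trace is zero, and $\int K(s)s_1s_2\,ds=0$ for a product kernel. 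The display nonetheless assigns squared bias $\tfrac{\mu_2^2}{2}h^4$.
\item For $H_m=I$, the display is too small by a factor $d_z$.
\end{enumerate}
Your fallback of reading $\|H_m\|_F$ as a bound does not rescue this. It turns an asymptotic equality into an inequality, and even the inequality fails, since Cauchy--Schwarz only gives $(\mathrm{tr}\,H)^2\le d_z\|H\|_F^2$. So no completion of your argument yields the display as written, and the paper's sketch has the same mismatch. What the standard decomposition proves is $\mathrm{bias}^2=h^4\mu_2^2\bigl\|\tfrac{1}{2}\Delta m(z)+J_m(z)\nabla\log p(z)\bigr\|^2$, with $\Delta m$ the componentwise Laplacian and $J_m$ the Jacobian.

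Moreover, in BSD the design term is not a nuisance you can assume away. The diffusion kernel compares $Y_i$ with $u_j$ itself, so the covariate $z_j=(u_j,x_j[0],x_j[H])$ contains the response. Then $m(z)$ is just the $Y$-block of $z$, so $H_m\equiv 0$ and $\sigma_v^2\equiv 0$, and both displayed leading terms vanish. The true leading bias is $h^2\mu_2\nabla_Y\log p(z)$. This is Tweedie's score term, i.e.\ exactly the signal BSD is supposed to extract. The display therefore collapses to $\E\|\hat m_N(z)-m(z)\|^2=o\bigl(h^4+(Nh^{d_z})^{-1}\bigr)$. That claim is false whenever $\nabla_Y p(z)\neq 0$ and $h^4$ is not negligible against $(Nh^{d_z})^{-1}$, e.g.\ at the paper's $h^*\asymp N^{-1/(d_z+4)}$. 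Your uniform-design patch is precisely the assumption that makes it vacuously true.

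A correct version should carry the trace-plus-design bias above. It should also list the hypotheses your Steps~2--3 actually use but (A1)--(A4) do not provide: differentiability of $p$ near $z$ (you Taylor-expand it) and continuity of $\sigma_v^2$.
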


\noindent\textit{Proof sketch.}
Standard bias-variance decomposition for the NW estimator~\cite{bierens1987kernel}. The optimal bandwidth $h^* = O(N^{-1/(d_z+4)})$ yields MSE rate $O(N^{-4/(d_z+4)})$. \qed

\noindent\textbf{Implications.}
The curse of dimensionality appears through $d_z$ in the variance term.  With $d_z \approx 130{+}$ for the parking task, the optimal rate converges slowly, yet BSD performs well because trajectories lie on a low-dimensional manifold and the multi-sample selection bypasses kernel averaging for exploitation.  The bias term ($\propto h^4$) explains why adaptive bandwidth---which shrinks $h$ at late steps---increases variance and degrades performance relative to fixed bandwidth.

\subsection{DeePC Equivalence for LTI Systems}

\begin{proposition}[LTI Reduction to Regularized DeePC]
\label{prop:deepc}
Consider an LTI system $x_{t+1} = Ax_t + Bu_t$ and let the dataset $\Dset$ be formed from a single persistently exciting trajectory of length $L$, partitioned into $N = L - T_{\rm ini} - H + 1$ overlapping Hankel windows of length $T_{\rm ini} + H$.  Using a Gaussian diffusion kernel (Eq.~\ref{eq:w_diff}) with bandwidth $\beta$ and no context/goal kernels ($\nu_x, \nu_g \to \infty$), BSD's estimate satisfies
\begin{equation}
    \Yhat_0 = U_f \alpha, \;\; \alpha_j = \frac{\exp\!\bigl(-\|Y_i - u_j\|^2 / 2\beta^2\bigr)}{\sum_k \exp\!\bigl(-\|Y_i - u_k\|^2 / 2\beta^2\bigr)}
\end{equation}
and $U_f$ is the future-input block of the Hankel matrix.  The softmax weights $\alpha$ solve
\begin{equation}
    \alpha = \arg\min_{\alpha} \; \|U_f\alpha - Y_i\|^2 + \beta^2 \cdot \text{KL}(\alpha \| \mathbf{1}/N)
    \label{eq:deepc_equiv}
\end{equation}
which is regularized DeePC~\cite{coulson2022robust} with an entropic regularizer controlled by the kernel bandwidth $\beta$.  As $\beta \to 0$, $\alpha$ concentrates on the nearest Hankel column, recovering nearest-neighbor; as $\beta \to \infty$, $\alpha \to \mathbf{1}/N$, recovering uniform averaging.
\end{proposition}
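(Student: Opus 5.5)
The plan has three parts: identify the estimator with $U_f\alpha$, prove the variational characterization from the Gibbs variational principle, and read off the two limits. While working it out I found that the objective as literally written in Eq.~(\ref{eq:deepc_equiv}) is not the one the softmax minimizes. The proof below establishes a corrected variant, and I flag that as the main obstacle.

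\textbf{Step 1: $\Yhat_0 = U_f\alpha$.} Letting $\nu_x,\nu_g\to\infty$ makes $\log w_j^{\text{ctx}}$ and $\log w_j^{\text{goal}}$ tend to $0$ for every $j$. I take the reward temperature to be inactive ($\eta=0$), which the statement implicitly assumes. These factors are then common to all $j$ and cancel in the normalization, so $\wh_j=\alpha_j$ exactly as displayed. Each Hankel window $j$ contributes its future-input segment $u_j\in\R^{H N_u}$. Stacking these as the columns of $U_f$ and substituting into Eq.~(\ref{eq:bsd_estimate}) gives $\Yhat_0=\sum_j \alpha_j u_j = U_f\alpha$.

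\textbf{Step 2: variational characterization.} Set $c_j=\tfrac12\|Y_i-u_j\|^2$ and minimize over the simplex $\Delta_N$ the functional
\[
J(\alpha)=\sum_j \alpha_j c_j + \beta^2\,\text{KL}(\alpha\,\|\,\mathbf 1/N).
\]
This is strictly convex, since KL is strictly convex and the first term is linear. Stationarity of the Lagrangian with a multiplier for $\sum_j\alpha_j=1$ gives $c_j+\beta^2(\log(N\alpha_j)+1)+\lambda=0$, hence $\alpha_j\propto\exp(-c_j/\beta^2)$. This is exactly the stated softmax. Positivity is automatic, because the entropy term blows up in slope at the simplex boundary. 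Equivalently, one can write $J(\alpha)=\beta^2\,\text{KL}(\alpha\,\|\,\alpha^\star)+\text{const}$ and read off the minimizer $\alpha^\star$ directly.

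To connect $J$ to a DeePC-type fidelity term, I would use the exact decomposition
\[
\sum_j\alpha_j\|u_j-Y_i\|^2=\|U_f\alpha-Y_i\|^2+\sum_j\alpha_j\|u_j-U_f\alpha\|^2 .
\]
This shows the softmax minimizes $\tfrac12\|U_f\alpha-Y_i\|^2 + \tfrac12\sum_j\alpha_j\|u_j-U_f\alpha\|^2 + \beta^2\,\text{KL}$. That is regularized DeePC with an entropic regularizer plus a nonnegative spread penalty, which is itself a regularizer on $\alpha$.

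\textbf{Main obstacle.} Eq.~(\ref{eq:deepc_equiv}) as written has the purely quadratic fidelity $\|U_f\alpha-Y_i\|^2$ and no factor $\tfrac12$. Its stationarity condition is $2U_f^\top(U_f\alpha-Y_i)+\beta^2(\log N\alpha+\mathbf 1)+\lambda\mathbf 1=0$, whose solution is not the softmax unless the $u_j$ are affinely degenerate. So I would prove the statement in the corrected form: either with the expected-fidelity term $\sum_j\alpha_j\|u_j-Y_i\|^2$, or with the quadratic fidelity plus the spread penalty, and with the regularization weight rescaled to $2\beta^2$ if the $\tfrac12$ is dropped. I would note explicitly that the literal version holds only up to this correction.

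\textbf{Step 3: limits.} As $\beta\to0$, assuming a unique minimizer $j^\star=\arg\min_j\|Y_i-u_j\|$, we have $\alpha_j/\alpha_{j^\star}=\exp(-(c_j-c_{j^\star})/\beta^2)\to0$ for every $j\neq j^\star$. Hence $\alpha\to e_{j^\star}$, which is nearest-neighbor retrieval. As $\beta\to\infty$, every exponent tends to $0$ with the $c_j$ bounded by (A2), so $\alpha\to\mathbf 1/N$, which is uniform averaging.
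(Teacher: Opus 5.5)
Your proof is correct. It is the rigorous form of the paper's own sketch, which asserts that the softmax solves a maximum-entropy problem whose Lagrangian dual is Eq.~(\ref{eq:deepc_equiv}); your Gibbs functional is exactly that Lagrangian.

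The obstacle you flag is a genuine error in the statement, not a gap in your argument. The maximum-entropy problem whose solution is the softmax constrains the expected cost $\sum_j\alpha_j\|u_j-Y_i\|^2$. The sketch silently replaces this by the cost of the mean, $\|U_f\alpha-Y_i\|^2$. That discards exactly the spread term your decomposition isolates, along with a factor of $2$ in the KL weight.

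The proposition is in fact internally inconsistent. Take scalar columns $u_1=0$, $u_2=1$ and $Y_i=0.4$.
\begin{itemize}
\item As $\beta\to 0$ the softmax tends to $e_1$, matching the nearest-neighbor claim.
\item Since $0\le\mathrm{KL}(\alpha\,\|\,\mathbf 1/2)\le\log 2$, the minimizers of $\|U_f\alpha-Y_i\|^2+\beta^2\,\mathrm{KL}(\alpha\,\|\,\mathbf 1/2)$ instead converge to the unique zero of the fidelity term on the simplex, $\alpha=(0.6,0.4)$.
\end{itemize}
What can actually be proved is your corrected statement, most cleanly $\alpha=\arg\min_{\alpha\in\Delta_N}\sum_j\alpha_j\|u_j-Y_i\|^2+2\beta^2\,\mathrm{KL}(\alpha\,\|\,\mathbf 1/N)$. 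Your Steps 1 and 3 go through unchanged.

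Three small refinements.
\begin{itemize}
\item You are right not to invoke Willems' lemma as the paper does. Neither the LTI structure nor persistency of excitation enters any displayed identity; they all hold for arbitrary columns $u_j$. Also, with $\alpha$ confined to the simplex the representable set is the convex hull of the Hankel columns, not their span. So the DeePC link is an analogy in either form.
\item Your parenthetical \emph{unless the $u_j$ are affinely degenerate} is not the right exceptional condition; three distinct collinear columns always fail. On the simplex, the spread term $\tfrac12\sum_j\alpha_j\|u_j-U_f\alpha\|^2$ has partial derivatives $\tfrac12\|u_k-U_f\alpha\|^2$. Hence the softmax $\alpha^\star$ minimizes the scaled objective $\tfrac12\|U_f\alpha-Y_i\|^2+\beta^2\,\mathrm{KL}$ exactly when all columns are equidistant from $U_f\alpha^\star$.
\item On the simplex, that spread term equals $\tfrac12\sum_j\alpha_j\|u_j\|^2-\tfrac12\|U_f\alpha\|^2$, which is concave in $\alpha$. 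Calling it a regularizer is therefore generous; the expected-fidelity form is the one to state.
\end{itemize}
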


\noindent\textit{Proof sketch.}
By Willems' Lemma~\cite{willems2005note}, the Hankel columns span all LTI trajectories. The softmax weights solve a maximum-entropy problem whose Lagrangian dual is Eq.~\ref{eq:deepc_equiv}. The KL term is analogous to the $\ell_2$ regularizer in standard DeePC. \qed

\subsection{Safety Preservation}

\begin{proposition}[Safety Inheritance]
\label{prop:safety}
If the safety shield $\mathcal{S}$ enforces $\mathcal{S}(X)_t \in \mathcal{C}_{\rm safe}$ for any input state sequence $X$ with $x_0 \in \mathcal{C}_{\rm safe}$, then $\mathcal{S}(\Xhat)_t \in \mathcal{C}_{\rm safe}$ for any BSD estimate $\Xhat$.
\end{proposition}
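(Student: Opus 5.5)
The argument is a direct instantiation of the universally quantified hypothesis on $\mathcal{S}$. The only real work is to check that every BSD estimate is a legitimate input to the shield.

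\textbf{Step 1: $\Xhat$ is an admissible input.} We confirm that any BSD state estimate is a state sequence with a safe initial state.
\begin{itemize}
\item For the Nadaraya--Watson prediction $\Xhat=\sum_j \wh_j x_j$ of Eq.~(\ref{eq:bsd_estimate}), this is a finite convex combination of elements of $\R^{H\times N_x}$ (using $\wh_j\ge 0$ and $\sum_j \wh_j=1$). By (A2), all the $x_j$ lie in a bounded set, so $\Xhat$ is a well-defined, finite state sequence of the correct shape.
\item For the multi-sample variant, $X^{(k)}=x_{j(k)}$ is literally one of the stored data trajectories, so it too is a state sequence of the correct shape.
\end{itemize}

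\textbf{Step 2: handle the initial state.} The hypothesis only covers inputs with $x_0\in\mathcal{C}_{\rm safe}$, while $\Xhat[0]$ is a weighted average of the stored initial states and need not equal the true $\xzero$. I would resolve this as Safe-MPD does: the shield is anchored at the measured current state, so its first output state is $\xzero$ itself. I take $\xzero\in\mathcal{C}_{\rm safe}$ as the standing assumption; otherwise no shield can produce a safe sequence. To make this explicit, I would write $\mathcal{S}(\Xhat)$ as the shield applied to the sequence whose initial entry is $\xzero$ and whose later entries are those of $\Xhat$. This is exactly how the shielded rollout is initialized in the method.

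\textbf{Step 3: induction over timesteps.} I would argue by induction on $t$, using the revert rule of the Safe-MPD shield.
\begin{itemize}
\item Base case: $\mathcal{S}(\Xhat)_0=\xzero\in\mathcal{C}_{\rm safe}$.
\item Inductive step: assume $\mathcal{S}(\Xhat)_{t-1}\in\mathcal{C}_{\rm safe}$. If the proposed state $\Xhat_t$ is in $\mathcal{C}_{\rm safe}$, the shield keeps it. Otherwise the shield reverts to $\mathcal{S}(\Xhat)_{t-1}$, which is safe by hypothesis.
\end{itemize}
The key point is that the membership test and the revert operation depend only on the numerical values of the states. They do not depend on whether those states came from $f$ or from kernel averaging. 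This is exactly the source-independence noted in the Safety Preservation subsection, and it lets the shield property be applied verbatim.

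\textbf{Main obstacle.} The delicate step is Step 2, the initial-state mismatch, since a convex combination of safe states can be unsafe when $\mathcal{C}_{\rm safe}$ is nonconvex (obstacles). I would handle this by relying on the shield's anchoring at $\xzero$ rather than claiming any convexity. I would also remark that the guarantee concerns the predicted states $\mathcal{S}(\Xhat)$. Whether the true system executing $\Yhat_0$ follows them is a separate accuracy question, controlled by Propositions~\ref{prop:consistency} and~\ref{prop:mse} and by the collision margin.
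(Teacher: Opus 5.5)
Your argument is correct, and at its core it is the paper's. The paper's entire proof is the one line that the shield is agnostic to the source of its input states, i.e.\ the hypothesis on $\mathcal{S}$ is simply instantiated at $X=\Xhat$.

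Where you differ is in noticing that this instantiation is only licensed when the first entry of the shield's input lies in $\mathcal{C}_{\rm safe}$. The paper's one-liner silently assumes this. Your observation that $\sum_j \wh_j x_j[0]$ can leave a nonconvex $\mathcal{C}_{\rm safe}$ shows the statement as written genuinely needs an extra hypothesis.

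Be aware, though, that your fix is a convention you are adding, not something the method provides. Algorithm~\ref{alg:bsd} shields $X^{(k)}=x_{j(k)}$ with its own stored initial state, not with $\xzero$ substituted, so your sentence ``this is exactly how the shielded rollout is initialized in the method'' is not supported for BSD. You should state the anchoring, together with $\xzero\in\mathcal{C}_{\rm safe}$, as an explicit assumption of the proposition. Alternatively, for the multi-sample path the algorithm actually shields, simply assume the stored initial states lie in $\mathcal{C}_{\rm safe}$, as one would expect for data produced by Safe-MPD runs from admissible initial conditions.

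Your Step~3 is redundant. Once the shield's input has a safe first entry, the hypothesis on $\mathcal{S}$ yields the conclusion for every $t$ at once. The induction merely re-derives that hypothesis for the particular revert-rule shield. If it were what carried the argument, it would lose the shield-independence the paper relies on when claiming BSD inherits DualShield or SafeDiffuser unchanged.

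The paper's route buys brevity and generality over shields. Yours buys an honest treatment of the initial-state precondition, which is the only place in this proposition where any care is actually needed.
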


\noindent\textit{Proof.}
Immediate: the shield is agnostic to the source of its input states. \qed

\noindent Safety is a property of the shield, not the planner.  BSD therefore inherits any shield designed for model-based diffusion---including DualShield~\cite{dualshield2026} and SafeDiffuser~\cite{safediffuser2025}---without modification.

% ============================================================
% VI. EXPERIMENTS
% ============================================================
\section{Experiments}

We evaluate BSD on four robotic systems in a parking scenario, comparing against the model-based baseline (MBD/Safe-MPD) and ablation variants.

\subsection{Experimental Setup}

\textbf{Systems.} We use four tractor-trailer variants from the Safe-MPD benchmark~\cite{kim2026safempd}, with increasing state dimensionality and nonlinearity:
\begin{itemize}
    \item \textbf{Bicycle} (3D state: $x, y, \theta$): Kinematic bicycle model with steering and velocity inputs.
    \item \textbf{TT2D} (4D state: $x, y, \theta_1, \theta_2$): Tractor-trailer with coupled $\sin/\cos$ hitch dynamics.
    \item \textbf{NTrailer} (5D state): $N$-trailer generalization with additional trailer joint.
    \item \textbf{AccTT2D} (6D state): Accelerating tractor-trailer with velocity and acceleration states.
\end{itemize}

\textbf{Scenario.} Each system must park in a designated space within a $32\text{m} \times 32\text{m}$ lot containing 16 spaces (8 columns $\times$ 2 rows). Initial positions are randomized.

\textbf{Data collection.} We collect $N = 1{,}000$ trajectories per system by running Safe-MPD with analytical dynamics from randomized initial conditions, filtering for minimum reward $\geq 0$.

\textbf{Conditions.} We compare four planning conditions:
\begin{itemize}
    \item \textbf{MBD}: Model-based diffusion with analytical dynamics (upper bound).
    \item \textbf{BSD-fix}: Behavioral score diffusion with fixed bandwidth (our primary method).
    \item \textbf{BSD}: BSD with adaptive bandwidth schedule ($\gamma = 0.5$).
    \item \textbf{NN}: Nearest-neighbor retrieval without diffusion (lower bound).
\end{itemize}

All diffusion-based methods use $N_{\text{diffuse}} = 100$ denoising steps and $K = 20{,}000$ candidate samples per step. BSD parameters: $\nu_x = 2.0$, $\nu_g = 3.0$, $\eta = 10.0$, dimensionality scaling $d^{0.5}$.

\textbf{Protocol.} 50 trials per condition with shared random seeds across conditions. All experiments run on a single NVIDIA RTX 4080 (12~GB). We report bootstrapped 95\% confidence intervals from 10{,}000 resamples.

\subsection{Main Results}

Fig.~\ref{fig:main_results} presents the primary comparison across all four systems. BSD-fix (red) achieves reward within 0.3\% of MBD (blue) on three of four systems (Bicycle, TT2D, NTrailer), with overlapping confidence intervals indicating statistically indistinguishable performance. On AccTT2D, the most challenging 6D system, BSD-fix falls within 6.8\% of MBD. Across all four systems, BSD-fix reaches 98.5\% of MBD's average reward while requiring no dynamics model.

The gap between BSD-fix and NN (grey) is substantial and widens with system complexity, confirming that diffusion denoising contributes far more than simple trajectory retrieval. BSD with adaptive bandwidth (salmon) consistently falls between BSD-fix and NN, indicating that bandwidth adaptation is counterproductive in this setting.

Table~\ref{tab:main_results} provides full numerical results with bootstrapped 95\% confidence intervals and planning times.

\begin{figure}[!htbp]
    \centering
    \includegraphics[width=\columnwidth]{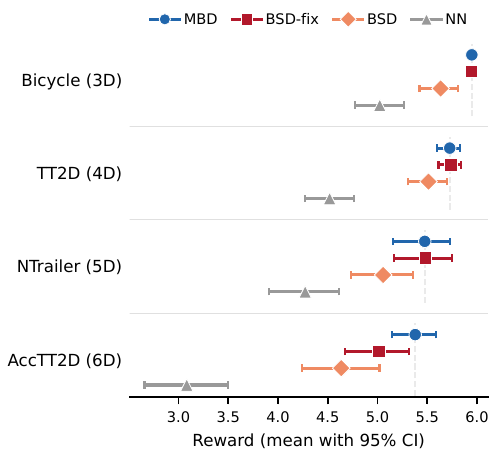}
    \caption{Main results across four systems of increasing state dimensionality (3D--6D). Each dot shows the mean reward; whiskers indicate bootstrapped 95\% confidence intervals (10{,}000 resamples). Vertical dashed lines mark the MBD (model-based) reference. BSD-fix nearly matches MBD on all systems while substantially outperforming the no-diffusion baseline (NN).}
    \label{fig:main_results}
\end{figure}

\begin{table}[!htbp]
\centering
\caption{Detailed results: 50 trials per condition. Reward: mean $\pm$ std [bootstrapped 95\% CI]. Safety: collision- and constraint-free rate.}
\label{tab:main_results}
\resizebox{\columnwidth}{!}{%
\begin{tabular}{llccc}
\toprule
\textbf{System} & \textbf{Method} & \textbf{Reward [95\% CI]} & \textbf{Safety} & \textbf{Time (ms)} \\
\midrule
\multirow{4}{*}{Bicycle}
 & MBD & $5.95 \pm 0.04$ \scriptsize{[5.94, 5.96]} & 100\% & 687 \\
 & BSD-fix & $5.95 \pm 0.05$ \scriptsize{[5.93, 5.96]} & 100\% & 2585 \\
 & BSD & $5.63 \pm 0.71$ \scriptsize{[5.42, 5.81]} & 100\% & 2876 \\
 & NN & $5.02 \pm 0.90$ \scriptsize{[4.78, 5.27]} & 100\% & $<$1 \\
\midrule
\multirow{4}{*}{TT2D}
 & MBD & $5.73 \pm 0.43$ \scriptsize{[5.60, 5.83]} & 96\% & 1181 \\
 & BSD-fix & $5.74 \pm 0.41$ \scriptsize{[5.61, 5.84]} & 96\% & 2928 \\
 & BSD & $5.51 \pm 0.72$ \scriptsize{[5.31, 5.70]} & 96\% & 2962 \\
 & NN & $4.52 \pm 0.91$ \scriptsize{[4.27, 4.76]} & 96\% & $<$1 \\
\midrule
\multirow{4}{*}{NTrailer}
 & MBD & $5.48 \pm 1.05$ \scriptsize{[5.16, 5.73]} & 90\% & 2458 \\
 & BSD-fix & $5.48 \pm 1.06$ \scriptsize{[5.17, 5.75]} & 90\% & 4768 \\
 & BSD & $5.06 \pm 1.14$ \scriptsize{[4.73, 5.36]} & 90\% & 4731 \\
 & NN & $4.27 \pm 1.26$ \scriptsize{[3.91, 4.61]} & 90\% & $<$1 \\
\midrule
\multirow{4}{*}{AccTT2D}
 & MBD & $5.38 \pm 0.79$ \scriptsize{[5.15, 5.59]} & 90\% & 4691 \\
 & BSD-fix & $5.01 \pm 1.18$ \scriptsize{[4.67, 5.32]} & 86\% & 6903 \\
 & BSD & $4.64 \pm 1.39$ \scriptsize{[4.24, 5.02]} & 86\% & 6780 \\
 & NN & $3.08 \pm 1.52$ \scriptsize{[2.66, 3.50]} & 92\% & $<$1 \\
\bottomrule
\end{tabular}%
}
\end{table}

\subsection{Reward Distributions}

Fig.~\ref{fig:distributions} shows full per-trial reward distributions. On Bicycle and TT2D, BSD-fix produces tight distributions nearly identical to MBD. On NTrailer and AccTT2D, all methods broaden, with heavier left tails for BSD variants from initial conditions far from training coverage. BSD-fix has lower variance than BSD (adaptive) on all four systems, supporting the theoretical prediction (Proposition~\ref{prop:mse}) that fixed bandwidth stabilizes kernel weight distributions.

\begin{figure*}[!htbp]
    \centering
    \includegraphics[width=\textwidth]{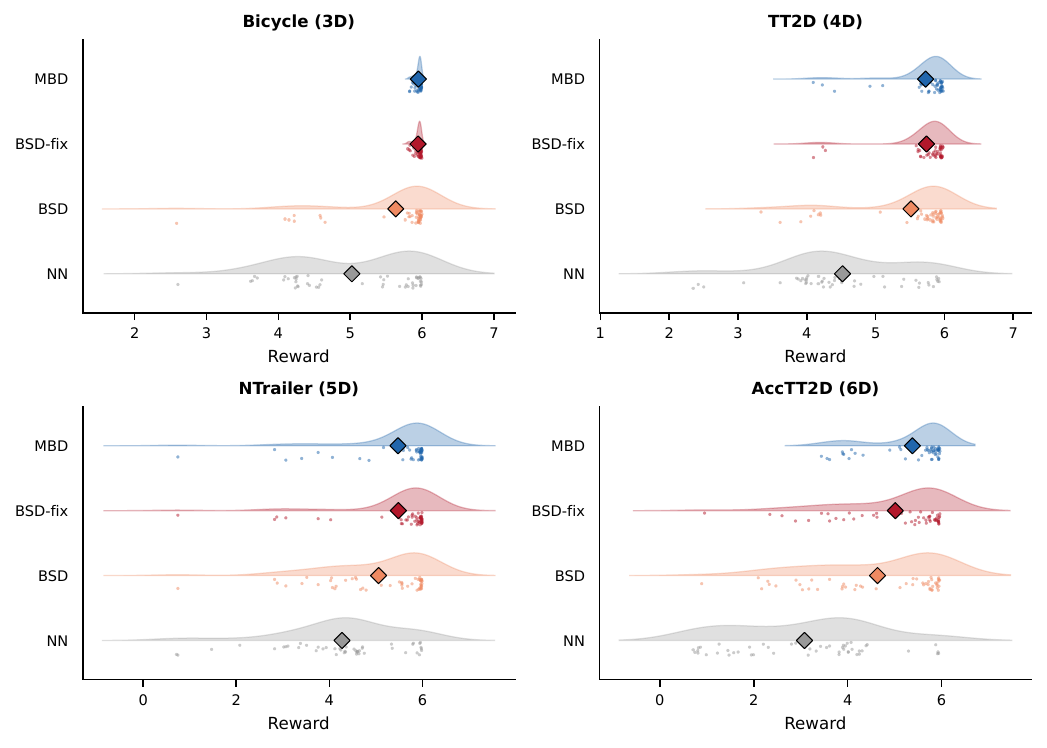}
    \caption{Per-trial reward distributions across all four systems (50 trials each). Half-violins show kernel density estimates; individual dots represent single trials; diamonds mark the mean. BSD-fix (red) closely matches MBD (blue) in both location and spread, while NN (grey) exhibits substantially lower and more dispersed rewards. Variance increases with system dimensionality for all methods.}
    \label{fig:distributions}
\end{figure*}

\subsection{Dimensionality Scaling}

Fig.~\ref{fig:scaling} plots reward as a percentage of MBD vs.\ state dimensionality. BSD-fix maintains $>$99\% through 5D but drops to 93.2\% at 6D. NN degrades much more steeply---from 84.5\% at 3D to 57.3\% at 6D---because single-shot retrieval cannot compensate for sparse coverage. The BSD-fix/NN gap \emph{widens} from 15 to 36 percentage points, indicating that diffusion denoising becomes more valuable as complexity increases.

\begin{figure}[!htbp]
    \centering
    \includegraphics[width=\columnwidth]{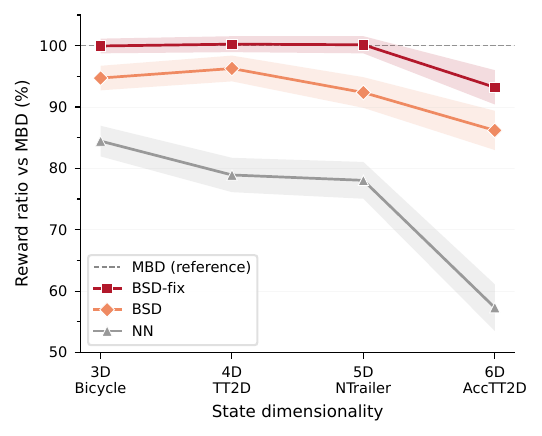}
    \caption{Performance relative to MBD (\%) vs.\ state dimensionality. BSD-fix (red) maintains near-parity through 5D, while NN (grey) degrades steeply. The widening gap between BSD-fix and NN demonstrates that diffusion denoising becomes more valuable as system complexity increases.}
    \label{fig:scaling}
\end{figure}

\subsection{Trial-Level Correlation}

Shared random seeds allow pairing MBD and BSD-fix trials on identical initial conditions (Fig.~\ref{fig:paired}). Bicycle ($r = 0.99$) and NTrailer ($r = 0.98$) show near-perfect agreement. AccTT2D ($r = 0.70$) shows moderate correlation; the outliers correspond to boundary conditions where kernel coverage is sparse. These correlations confirm that BSD-fix tracks MBD faithfully per-trial, not just in aggregate.

\begin{figure}[!htbp]
    \centering
    \includegraphics[width=\columnwidth]{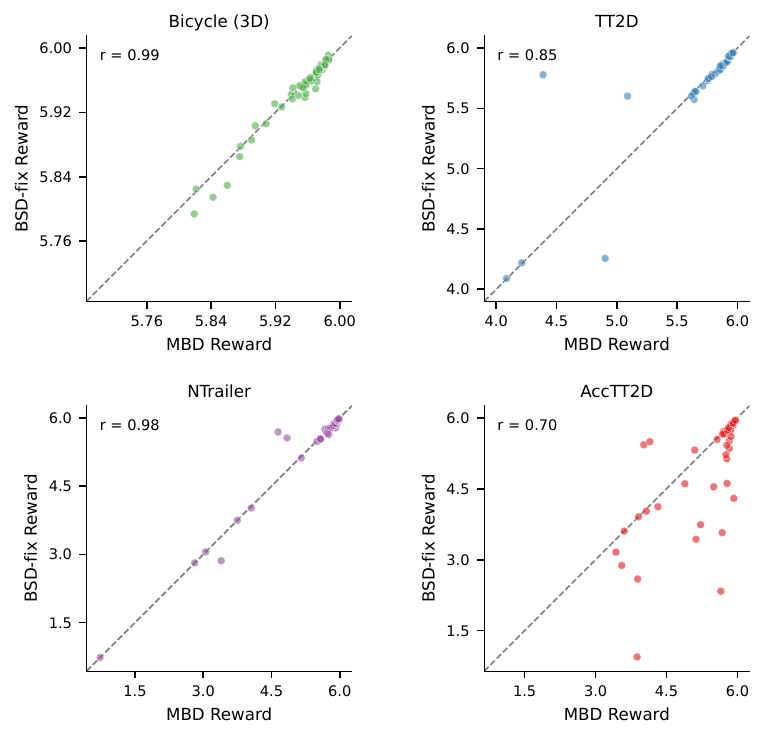}
    \caption{Paired per-trial reward comparison between MBD and BSD-fix (same random seeds). The diagonal line represents equal performance. High Pearson correlations ($r \geq 0.70$, up to $0.99$) indicate BSD-fix tracks MBD faithfully on individual trials, not just in aggregate.}
    \label{fig:paired}
\end{figure}

\subsection{Ablation Analysis}

\textbf{Diffusion vs.\ retrieval.} BSD-fix outperforms NN by 18--63\%, with the largest gains on AccTT2D (+62.7\%) where nonlinearity is highest. \textbf{Adaptive vs.\ fixed bandwidth.} BSD (adaptive) consistently underperforms BSD-fix by 4--8\%. Proposition~\ref{prop:mse} explains this: shrinking $h$ at late steps inflates variance faster than it reduces bias in high $d_z$. The multi-sample mechanism ($K = 20{,}000$) handles exploitation, making bandwidth adaptation redundant.

\subsection{Safety and Computation}

Safety rates match MBD on three of four systems (Bicycle: 100\%, TT2D: 96\%, NTrailer: 90\%). On AccTT2D, BSD achieves 86\% vs.\ MBD's 90\% (overlapping Wilson CIs), consistent with Proposition~\ref{prop:safety}'s guarantee that safety is a shield property---the small gap reflects kernel estimation precision, not shield failure. BSD adds 1.5--3.8$\times$ overhead (Table~\ref{tab:main_results}); the ratio decreases on more complex systems because dynamics rollout dominates computation on larger state spaces. Fig.~\ref{fig:safety_time} visualizes this trade-off: BSD methods cluster near MBD in safety while incurring moderate additional planning time.

\begin{figure}[!htbp]
    \centering
    \includegraphics[width=\columnwidth]{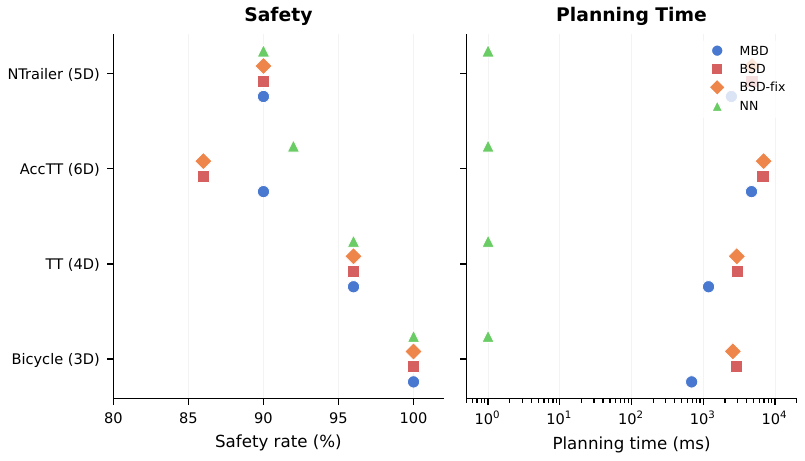}
    \caption{Safety rate vs.\ planning time across all systems and conditions. BSD methods achieve comparable safety to MBD at moderate computational overhead. NN has negligible planning time but lowest reward.}
    \label{fig:safety_time}
\end{figure}

\subsection{Qualitative Results}

Fig.~\ref{fig:trajectories} compares planned trajectories: MBD and BSD-fix produce smooth goal-reaching paths, while NN stops short. Fig.~\ref{fig:diffusion_process} visualizes BSD's denoising---early steps establish global direction; late steps refine near the goal.

\begin{figure}[!htbp]
    \centering
    \includegraphics[width=\columnwidth]{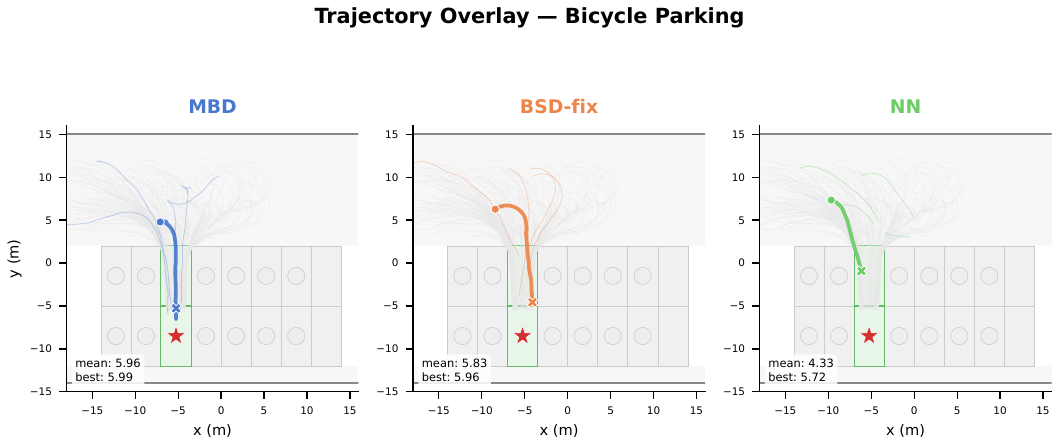}
    \caption{Bicycle parking trajectories (5 trials). BSD-fix matches MBD; NN stops short.}
    \label{fig:trajectories}
\end{figure}

\begin{figure}[!htbp]
    \centering
    \includegraphics[width=\columnwidth]{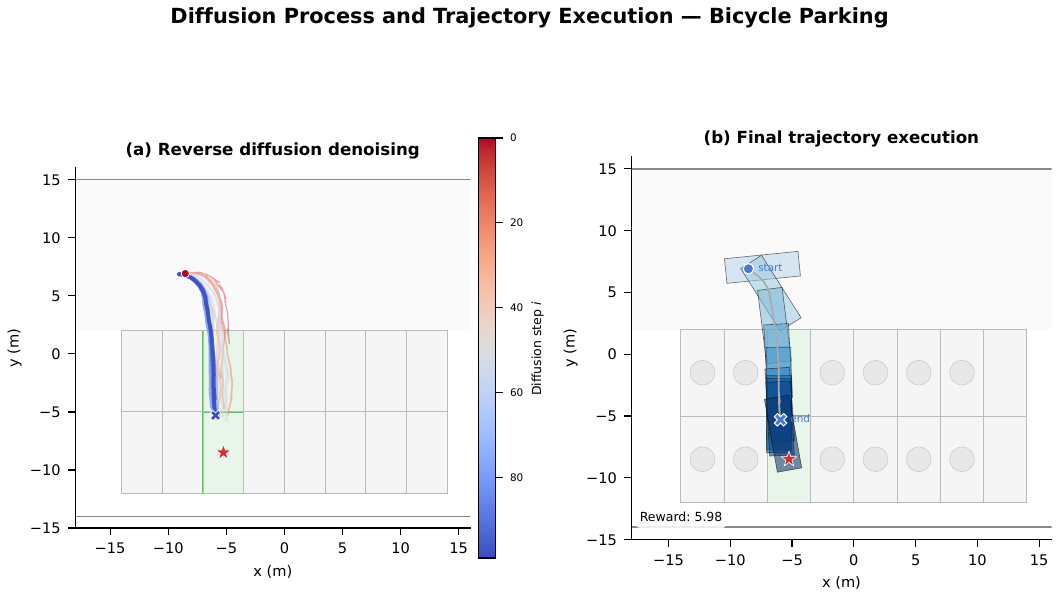}
    \caption{BSD denoising: (a) trajectory snapshots from noise (red) to refined (blue); (b) vehicle footprints along the final trajectory.}
    \label{fig:diffusion_process}
\end{figure}

% ============================================================
% VI. DISCUSSION
% ============================================================
\section{Discussion}

\textbf{When is BSD preferable?} BSD is suited for systems where dynamics models are unavailable, inaccurate, or expensive to evaluate. If an accurate model exists, MBD remains simpler and faster.

\textbf{Data requirements.} 1{,}000 trajectories suffice for 3D--5D systems ($>$99\% reward ratio), but the 6D AccTT2D system shows a 6.8\% gap. Proposition~\ref{prop:mse} predicts this: variance scales as $(Nh^{d_z})^{-1}$, so higher-dimensional systems require exponentially more data.

\textbf{Adaptive bandwidth is counterproductive.} Fixed bandwidth with multi-sample selection ($K = 20{,}000$) outperforms adaptive scheduling by 4--8\%. Proposition~\ref{prop:mse} explains this: shrinking $h$ at late steps inflates variance ($\propto (Nh^{d_z})^{-1}$) faster than it reduces bias ($\propto h^4$) when $d_z$ is large. The multi-sample mechanism handles exploitation, making bandwidth adaptation redundant.

\textbf{Connection to DeePC.} Proposition~\ref{prop:deepc} shows BSD reduces to regularized DeePC~\cite{coulson2019deepc, coulson2022robust} for LTI systems. For nonlinear systems, DeePC requires patches~\cite{verhoek2023}, while BSD's kernel estimator remains consistent (Proposition~\ref{prop:consistency}).

\textbf{Limitations.} (1) BSD's 1.5--3.8$\times$ overhead limits real-time use. (2) Evaluation is simulation-only; real-world data may introduce distributional shift. (3) No DeePC baselines or black-box system experiments. (4) Safety rates on AccTT2D are 4\% lower than MBD. (5) Training data was collected from the model-based planner---truly model-free settings would use teleoperation data.

% ============================================================
% VII. CONCLUSION
% ============================================================
\section{Conclusion}

We presented Behavioral Score Diffusion, a model-free diffusion planner that computes score functions from trajectory data via kernel-weighted estimation. We proved pointwise consistency for arbitrary continuous dynamics (Proposition~\ref{prop:consistency}), characterized the MSE rate (Proposition~\ref{prop:mse}), and established equivalence to regularized DeePC for LTI systems (Proposition~\ref{prop:deepc}). BSD achieves 98.5\% of model-based reward across four systems without dynamics models, outperforming nearest-neighbor retrieval by 18--63\%. Safety shielding transfers directly (Proposition~\ref{prop:safety}). Future work includes scaling via learned metrics, reducing overhead through approximate nearest-neighbor structures, and hardware validation.

% ============================================================
% REFERENCES
% ============================================================
\FloatBarrier
\bibliographystyle{IEEEtran}
\bibliography{references}

@inproceedings{janner2022planning,
  title={Planning with Diffusion for Flexible Behavior Synthesis},
  author={Janner, Michael and Du, Yilun and Tenenbaum, Joshua and Levine, Sergey},
  booktitle={International Conference on Machine Learning (ICML)},
  year={2022}
}

@inproceedings{pan2024mbd,
  title={Model-Based Diffusion for Trajectory Optimization},
  author={Pan, Chaoyi and Yi, Zeji and Shi, Guanya and Qu, Guannan},
  booktitle={Advances in Neural Information Processing Systems (NeurIPS)},
  year={2024}
}

@inproceedings{kim2026safempd,
  title={Safe Model Predictive Diffusion with Shielding},
  author={Kim, Taekyung and Majd, Keyvan and Okamoto, Hideki and Hoxha, Bardh and Panagou, Dimitra and Fainekos, Georgios},
  booktitle={IEEE International Conference on Robotics and Automation (ICRA)},
  year={2026}
}

@inproceedings{ajay2023decision,
  title={Is Conditional Generative Modeling All You Need for Decision-Making?},
  author={Ajay, Anurag and Du, Yilun and Gupta, Abhi and Tenenbaum, Joshua and Jaakkola, Tommi and Agrawal, Pulkit},
  booktitle={International Conference on Learning Representations (ICLR)},
  year={2023}
}

@inproceedings{chi2023diffusionpolicy,
  title={Diffusion Policy: Visuomotor Policy Learning via Action Diffusion},
  author={Chi, Cheng and Feng, Siyuan and Du, Yilun and Xu, Zhenjia and Cousineau, Eric and Burchfiel, Benjamin and Song, Shuran},
  booktitle={Robotics: Science and Systems (RSS)},
  year={2023}
}

@inproceedings{liao2025diffusiondrive,
  title={DiffusionDrive: Truncated Diffusion Model for End-to-End Autonomous Driving},
  author={Liao, Bencheng and Chen, Shaoyu and Yin, Haoran and Jiang, Bo and Wang, Cheng and Yan, Sixu and Zhang, Xinbang and Li, Xiangyu and Zhang, Ying and Zhang, Qian and Wang, Xinggang},
  booktitle={IEEE/CVF Conference on Computer Vision and Pattern Recognition (CVPR)},
  pages={12037--12047},
  year={2025}
}

@inproceedings{dong2024diffuserlite,
  title={DiffuserLite: Towards Real-time Diffusion Planning},
  author={Dong, Zibin and Hao, Jianye and Yuan, Yifu and Ni, Fei and Wang, Yitian and Li, Pengyi and Zheng, Yan},
  booktitle={Advances in Neural Information Processing Systems (NeurIPS)},
  year={2024}
}

@article{dualshield2026,
  title={DualShield: Safe Model Predictive Diffusion via Reachability Analysis for Interactive Autonomous Driving},
  author={Yang, Rui and Zheng, Lei and Yao, Ruoyu and Ma, Jun},
  journal={arXiv preprint arXiv:2601.15729},
  year={2026}
}

@inproceedings{safediffuser2025,
  title={SafeDiffuser: Safe Planning with Diffusion Probabilistic Models},
  author={Xiao, Wei and Wang, Tsun-Hsuan and Gan, Chuang and Hasani, Ramin and Lechner, Mathias and Rus, Daniela},
  booktitle={International Conference on Learning Representations (ICLR)},
  year={2025}
}

@article{zhang2025constrained,
  title={Constrained Diffusers for Safe Planning and Control},
  author={Zhang, Jichen and Zhao, Liqun and Papachristodoulou, Antonis and Umenberger, Jack},
  journal={arXiv preprint arXiv:2506.12544},
  year={2025}
}

@article{willems2005note,
  title={A Note on Persistency of Excitation},
  author={Willems, Jan C. and Rapisarda, Paolo and Markovsky, Ivan and De Moor, Bart L. M.},
  journal={Systems \& Control Letters},
  volume={54},
  number={4},
  pages={325--329},
  year={2005}
}

@inproceedings{coulson2019deepc,
  title={Data-Enabled Predictive Control: In the Shallows of the {DeePC}},
  author={Coulson, Jeremy and Lygeros, John and D{\"o}rfler, Florian},
  booktitle={European Control Conference (ECC)},
  pages={2696--2701},
  year={2019}
}

@article{coulson2022robust,
  title={Distributionally Robust Chance Constrained Data-Enabled Predictive Control},
  author={Coulson, Jeremy and Lygeros, John and D{\"o}rfler, Florian},
  journal={IEEE Transactions on Automatic Control},
  volume={67},
  number={7},
  pages={3289--3304},
  year={2022}
}

@inproceedings{verhoek2023,
  title={Direct Data-Driven State-Feedback Control of General Nonlinear Systems},
  author={Verhoek, Chris and Koelewijn, Patrick J. W. and Haesaert, Sofie and T{\'o}th, Roland},
  booktitle={IEEE Conference on Decision and Control (CDC)},
  year={2023}
}

@article{alsalti2024,
  title={Sample- and Computationally Efficient Data-Driven Predictive Control},
  author={Alsalti, Mohammad and Barkey, Manuel and Lopez, Victor G. and M{\"u}ller, Matthias A.},
  journal={arXiv preprint arXiv:2309.11238},
  year={2024}
}

@article{yang2026trainingfree,
  title={Training-Free Score-Based Diffusion for Parameter-Dependent Stochastic Dynamical Systems},
  author={Yang, Minglei and He, Sicheng},
  journal={arXiv preprint arXiv:2602.02113},
  year={2026}
}

@article{veiga2025kernelsmoothed,
  title={Kernel-Smoothed Scores for Denoising Diffusion: A Bias-Variance Study},
  author={Gabriel, Franck and Ged, Fran{\c{c}}ois and Han Veiga, Maria and Schertzer, Emmanuel},
  journal={arXiv preprint arXiv:2505.22841},
  year={2025}
}

@inproceedings{winnicki2025sdkde,
  title={{SD-KDE}: Score-Debiased Kernel Density Estimation},
  author={Epstein, Elliot L. and Dwaraknath, Rajat and Sornwanee, Thanawat and Winnicki, John and Liu, Jerry Weihong},
  booktitle={Advances in Neural Information Processing Systems (NeurIPS)},
  year={2025}
}

@article{nadaraya1964,
  title={On Estimating Regression},
  author={Nadaraya, Elizbar A.},
  journal={Theory of Probability and Its Applications},
  volume={9},
  number={1},
  pages={141--142},
  year={1964}
}

@article{watson1964,
  title={Smooth Regression Analysis},
  author={Watson, Geoffrey S.},
  journal={Sankhy\={a}: The Indian Journal of Statistics, Series A},
  volume={26},
  number={4},
  pages={359--372},
  year={1964}
}

@article{bierens1987kernel,
  title={Kernel Estimators of Regression Functions},
  author={Bierens, Herman J.},
  journal={Advances in Econometrics},
  volume={6},
  pages={99--144},
  year={1987}
}

@article{betts1998survey,
  title={Survey of Numerical Methods for Trajectory Optimization},
  author={Betts, John T.},
  journal={Journal of Guidance, Control, and Dynamics},
  volume={21},
  number={2},
  pages={193--207},
  year={1998}
}

\end{document}